\newtheorem{theorem}{Theorem}
\newtheorem{criteria}[theorem]{Criteria}
\newtheorem{definition}[theorem]{Definition}
\newtheorem{proposition}[theorem]{Proposition}
\newenvironment{tightcenter}{%
  \setlength\topsep{1pt}
  \setlength\parskip{1pt}
  \begin{center}
}{%
  \end{center}
}
\title{Counterfactual Fairness Is Basically Demographic Parity}
\author{
    \textbf{Lucas Rosenblatt,$^\dagger$\textsuperscript{\rm 1}
    R. Teal Witter}$^\dagger$\textsuperscript{\rm 1} 
}
\thanks{Research supported in part by NSF No. 1922658 and 1916505.}\\
\begin{document}

\maketitle

\def\thefootnote{$\dagger$}\footnotetext{Authors contributed equally.}\def\thefootnote{\arabic{footnote}}

\begin{abstract}

Making fair decisions is crucial to ethically implementing machine learning algorithms in social settings. In this work, we consider the celebrated definition of counterfactual fairness. We begin by showing that an algorithm which satisfies counterfactual fairness also satisfies demographic parity, a far simpler fairness constraint. Similarly, we show that all algorithms satisfying demographic parity can be trivially modified to satisfy counterfactual fairness. Together, our results indicate that counterfactual fairness is basically equivalent to demographic parity, which has important implications for the growing body of work on counterfactual fairness. We then validate our theoretical findings empirically, analyzing three existing algorithms for counterfactual fairness against three simple benchmarks. We find that two simple benchmark algorithms outperform all three existing algorithms---in terms of fairness, accuracy, and efficiency---on several data sets. Our analysis leads us to formalize a concrete fairness goal: to preserve the order of individuals within protected groups. We believe transparency around the ordering of individuals within protected groups makes fair algorithms more trustworthy. By design, the two simple benchmark algorithms satisfy this goal while the existing algorithms for counterfactual fairness do not.
\end{abstract}

\section{Introduction}
A pressing challenge in the deployment of algorithms in social
settings is the risk of ``unfair'' decisions with respect to 
protected attributes like race, gender, 
religion, sexual-orientation, or age.
In this work, we consider the supervised learning problem
of predicting outcomes from labelled observations
where the existing outcomes are already biased.
The observations consist of protected attributes and
other remaining attributes like test scores, grade point
averages, credit scores, health risks, and more context
dependent variables.
The goal is to ``fairly'' predict outcomes like
student success, credit worthiness, and health care needs.

One approach is to ignore the protected attributes
and train algorithms only on the remaining attributes i.e. ``fairness through blindness'' \cite{dwork2012fairness}.
However, there could be variables like height or zip code
which are proxies for protected attributes \cite{pedreshi2008discrimination,corbett2018measure}.
Even in the absence of these proxy variables, seemingly reasonable remaining predictive
attributes like study time or prior health care can still depend
on protected attributes.
A natural consequentialist approach in the face of concerns over group equity is demographic parity, which ensures an identical distribution over the outcomes for each protected group.
However, demographic parity has been critiqued because
it can allow for blatantly unfair choices to individuals \cite{dwork2012fairness, corbett2018measure}.

\begin{figure}[tb!]
    \centering
    \includegraphics[scale=.13,page=1]{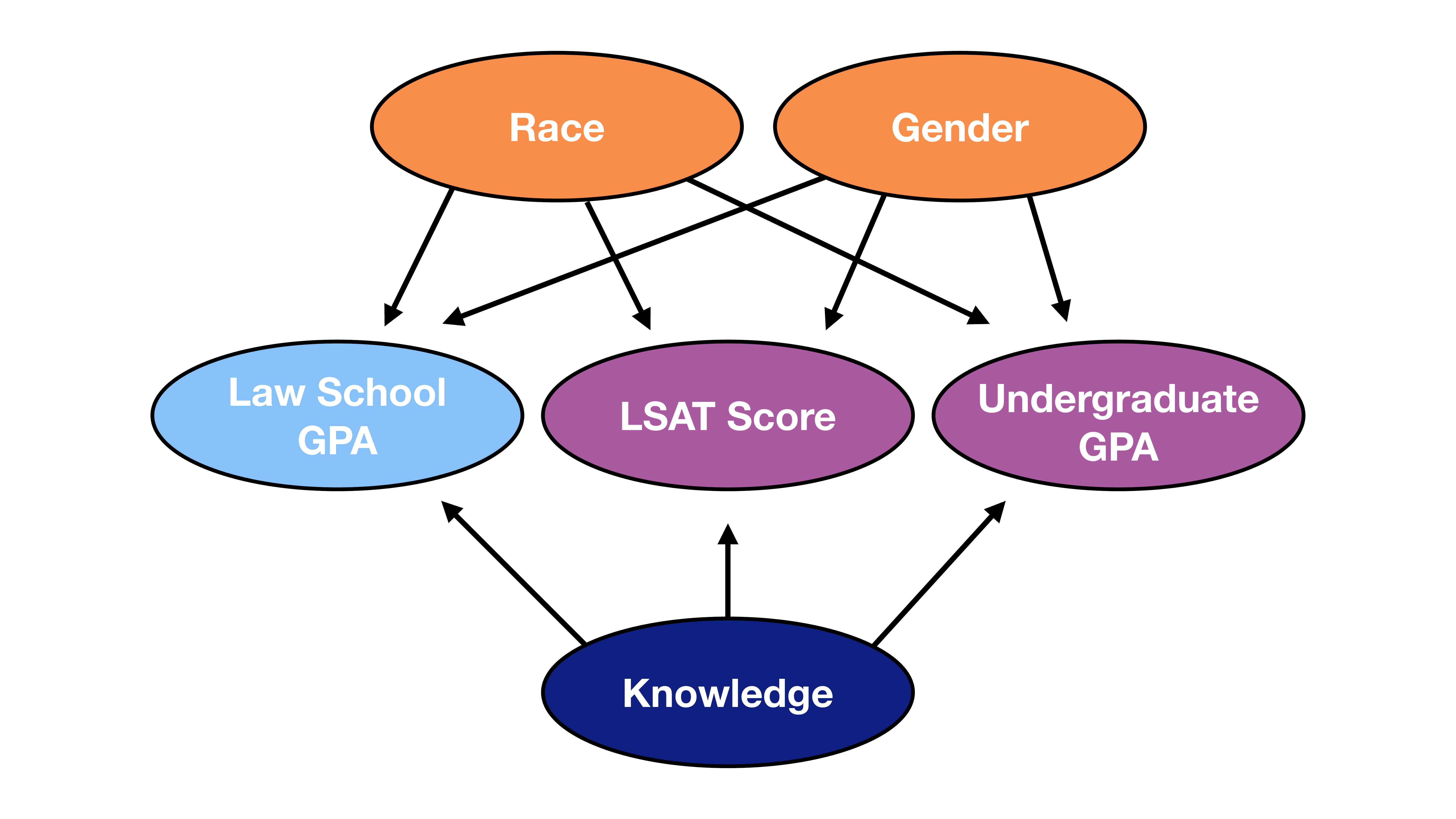}
    \caption{A causal model in a law school context.
    Directed edges between a source and a target
     denote a causal relationship in
     the sense that
     changing the value of the source
     (probabilistically)
     changes the value of the target.
     In this context, counterfactual fairness tries to make
     decisions based on knowledge (which cannot directly
     be measured) without biases from race or gender.}
    \label{fig:lawmodel}
    \vspace{-0.3cm}
\end{figure}

Counterfactual fairness offers a solution by advocating for a causal perspective, which provides statistical tools to help us minimize the direct effects of protected attributes on decisions
\cite{kusner2017counterfactual, Nilforoshan2022CausalCO}, and comports with a Rawlsian perspective of moral philosophy \cite{rawls2004theory}. It follows a surge of papers over the past decade
that borrow intuition from moral philosophy
to formally define what makes an algorithm fair
\cite{dwork2012fairness, feldman2015certifying, hardt2016equality, mitchell2021algorithmic}.
Drawing from \cite{pearl2000models}'s work on causal models,
\cite{kusner2017counterfactual} imagine a counterfactual world where only
the protected attributes of observations are changed.
Then, an algorithm is counterfactually fair if
it makes the same predictions for a hypothetical 
counterfactual version of
an observation as it would for the real version of the observation.
This individual notion of fairness promises to avoid the pitfalls
of the group based fairness measures we described earlier, though the relationship between causality, counterfactuals and moral philosophy is complicated \cite{kasirzadeh2021use}.

\vspace{-0.2cm}
\subsection{Preliminaries and Notation}

We follow notation laid out by
\cite{kusner2017counterfactual} for our formal analysis.
The variables are partitioned into $A$ (the set of
protected attributes), $U$ (the set of latent
attributes), $X$ (the set of remaining
variables), and $Y$ (the outcomes).

Figure \ref{fig:generalcausalmodel} shows a general causal
model.
Each variable is generated (probabilistically) 
from a governing distribution.
The parameters of a variable's distribution are fixed or derived
from the incoming edges in the causal model (or both).
For example, 
we may have $a \sim \textrm{Binom}(.3, 1)$, $u \sim \mathcal{N}(0,1)$,
and $x \sim \textrm{Pois}(au)$.
%In order to maintain parity with \cite{kusner2017counterfactual}, we will use the same data set they used as well as their law school causal model, depicted in Figure \ref{fig:lawmodel}, as our running example.

 \begin{figure}[ht]
     \centering
     \includegraphics[scale=.23,page=1]{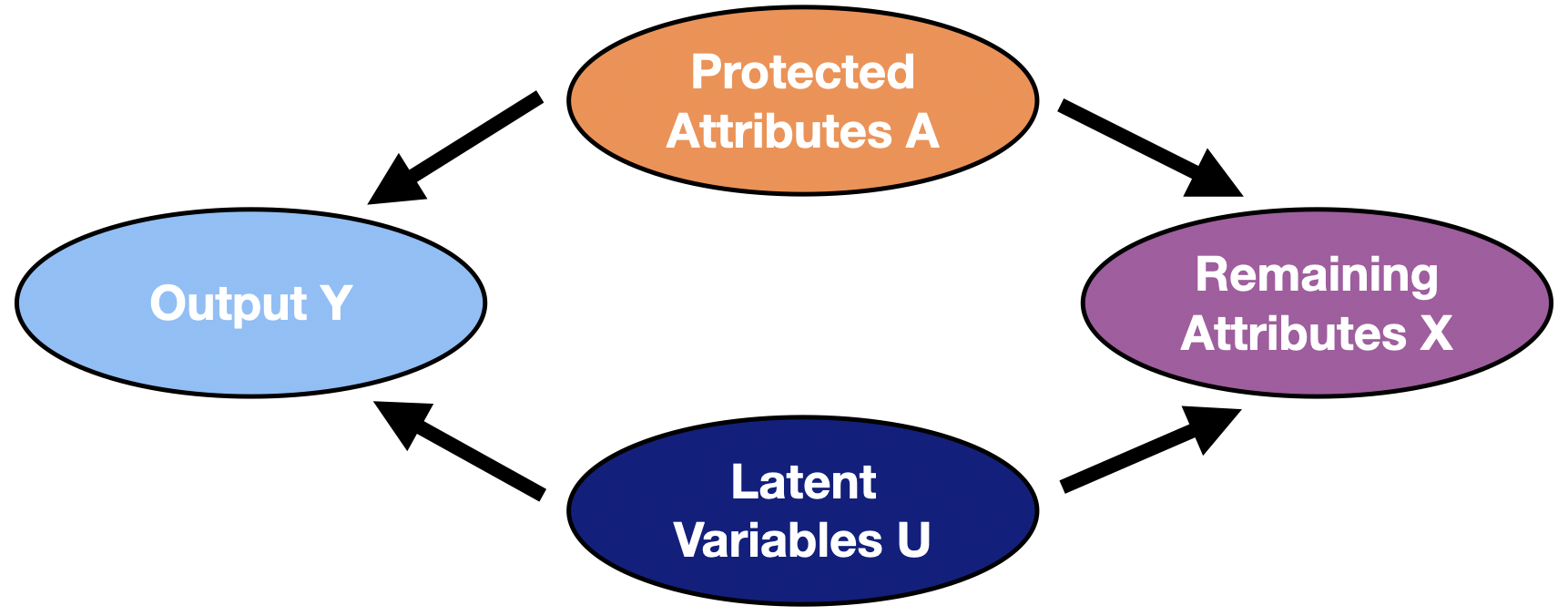}
     \caption{A general causal model consists of a directed acyclic graph
     and governing equations for the (probabilistic) generation of each
     variable.}
     \label{fig:generalcausalmodel}
 \end{figure}

A counterfactually fair algorithm uses $U$
to predict $Y$ since every 
other attribute is influenced
by the protected attributes $A$.
However, $U$ cannot be measured directly.
The solution of counterfactual fairness
is to model the 
joint distribution of each attribute 
in $X$ induced by the values of $A$ and $U$.
Then $U$ is estimated by the posterior distribution
on instances of $A$ and $X$.

Building a causal model requires domain expert knowledge
to establish both the direction of relationships between
attributes and the family of distributions (e.g., normal,
Poisson, binomial, etc.) that generates the 
distribution of each variable.
Once a causal model is determined, it is computationally
intensive to learn the parameters of each distribution
and then to estimate the posterior distribution of each attribute in $U$.

%Our main contribution is to show
%that counterfactual fairness is essentially
%equivalent to demographic parity, the simple
%group measure of fairness that only requires that
%the distribution of outputs is equal across
%protected groups.

% Motivated by these challenges, the goal of our
% work is to simplify counterfactual fairness.

\vspace{-0.2cm}
\subsection{Fairness Definitions}

In this section, we formalize the definitions
of demographic parity and counterfactual fairness.
We use capital letters $A$, $X$, $U$, and $Y$
to refer to sets of random variables,
and we use lowercase letters
$a$, $x$, $u$, and $y$ to refer to realizations 
of these random variables. 
Consider a predictor
$\hat{Y}:A \times U \times X \rightarrow Y$.
If a predictor $\hat{Y}$ does not use a particular
set of random variables then we omit that set.
We use $\hat{Y}_{A \gets a}(u)$ for some protected
group $a \in A$ to denote the prediction on 
$u$ where attributes in $A$ are set to $a$.

\begin{definition}[Demographic Parity \cite{calders2009building,zliobaite2015relation,zafar2015learning}]\label{def:demographicparity}
Given a predictor $\hat{Y}: X \times A \rightarrow Y$,
we say $\hat{Y}$ satisfies demographic parity if,
for all instances of protected attributes
$a$ and $a'$,
\begin{align*}
    \Pr(\hat{Y}(x, a) = y|A = a)=
    \Pr(\hat{Y}(x, a') = y| A = a')  
\end{align*}
where the probability is taken over
the conditional distribution of $X$
and the possible randomness of $\hat{Y}$.
\end{definition}

\begin{definition} [Counterfactual Fairness 
\cite{kusner2017counterfactual}]
\label{def:counterfactualfairness}
Let $A$, $U$, $X$, and $Y$ be sets of random
variables in a given causal model
with specified distributions for each attribute.
Consider any protected group $a' \in A$
and any outcome $y \in Y$.
Then a predictor
$\hat{Y}:U \times A \rightarrow Y$ is counterfactually fair 
if for all observations $x \in X$ and $a \in A$,
\begin{align*}
 \Pr(\hat{Y}_{A\leftarrow a}(u) = y| X=x, A=a) \\
 = \Pr(\hat{Y}_{A\leftarrow a'}(u) = y| X=x, A=a)
\end{align*}
where the probability is with respect to the posterior
distribution of $u$ induced by fixed observations $x$ and $a$.
\end{definition}

Definition~\ref{def:counterfactualfairness} states that it must be the case that the final prediction
of a counterfactually fair predictor
$\hat{Y}_{A \gets a}(u)$ is distributed like
$\hat{Y}_{A \gets a'}(u)$ when $a \neq a'$ and where the distribution of $u$ is
induced by observations of $X$ and $A$.

Now we have established the two primary definitions of fairness that we consider in our work. However, it is worth noting that difficulties in defining algorithmic fairness have led to many competing approaches. Furthermore, formal results have demonstrated the incompatibility of applying some fairness definitions in conjunction with others, which hints at irreconcilable philosophical groundings and a need for formalizations for different policy settings \cite{kleinberg2016inherent, chouldechova2016fair, corbett2018measure}.

\vspace{-0.2cm}
\subsection*{Our Contributions}
Our work starts with a simple observation
on the distribution of latent variables:
by their definition, latent variables must be
independent of protected attributes for the model to be counterfactually fair.
A careful look at this statement suggests
a connection to demographic parity, which
requires that predictions must be
independent of protected attributes.
A natural question is then:
\begin{tightcenter}
Do algorithms that satisfy demographic parity %\\
also satisfy counterfactual fairness?
\end{tightcenter}
% \begin{align*}
%     &\textrm{Do algorithms that satisfy demographic parity} 
%      \\
%     &\textrm{also satisfy counterfactual fairness?}
% \end{align*}
But for an algorithm to even satisfy the definition
of counterfactual fairness, we need a causal model,
latent variables, and a way of estimating latent
variables.
Our first result answers the question in the affirmative
if we trivially construct the necessary counterfactual
fairness machinery.

The next natural question is whether the converse
holds:
\begin{tightcenter}
Do algorithms that satisfy counterfactual fairness %\\ 
also satisfy demographic parity?
\end{tightcenter}
% \begin{align*}
%     &\textrm{Do algorithms that satisfy counterfactual fairness} 
%      \\
%     &\textrm{also satisfy demographic parity?}
% \end{align*}
We answer this question in the affirmative, as well.\footnote{
After posting an early draft of this work, we were made aware of a paper released several months before which independently shows counterfactual fairness implies demographic parity (see section 5.3 of \cite{fawkes2021selection}).
We believe this contemporary interest points
to the importance of our work.}
Together, the two results suggest that counterfactual
fairness is basically demographic parity.
This relationship is important for the fairness community
since counterfactual fairness is
often celebrated as a novel tool for fair decisions
while demographic parity is considered simple and flawed. For example, in \cite{caton2020fairness}, they note that demographic parity is flawed because it does not account for potential differences between groups, but do not mention limitations of counterfactual fairness. Other influential work takes a similar position \cite{gajane2017formalizing,corbett2018measure,coston2020counterfactual}.

We also analyze three existing algorithms for
counterfactual fairness in comparison to three
benchmark algorithms.
The data set is based on the running law school example
and we corroborate our findings on two additional
data sets and causal models.
Our analysis uses a relaxation of counterfactual fairness
and Kruskal-Wallis H tests (as one-sided tests for independence).
We find that two simple benchmark algorithms outperform
the existing algorithms in terms of
satisfying the requirements of counterfactual fairness,
computational efficiency, and accuracy.

We also introduce a notion of preserving group ordering.
Put simply, it requires that an individual who performs
better than another individual in the \textit{same}
protected group
under unfair labels must also perform better than
that individual under fair labels.
We believe that this definition is important 
for transparency and consistency, laying the foundations
for trust in fair algorithms. Why? Because inexplicable decisions in purportedly ``fair'' decision making processes can mask unforeseen technical bias \cite{abdollahi2018transparency, bhatt2020explainable}.
Interestingly, we show that preserving group ordering
can be mutually exclusive with counterfactual fairness
for certain causal models.
Empirically, we find the existing algorithms
for counterfactual
fairness have remarkably unstable orderings while
the simple benchmark algorithms are
consistent by design.

\paragraph*{Outline}
\begin{itemize}
    \item We first show that all counterfactually fair predictors
    satisfy demographic parity and all predictors satisfying demographic parity
    can be trivially modified to satisfy 
    counterfactual fairness.
    \item We subsequently analyze six algorithms (including
    those presented in \cite{kusner2017counterfactual})
    in the context of the following two questions:
    \begin{itemize}
        \item How do these algorithms satisfy three special
        fairness constraints (counterfactual fairness, demographic parity, and the independence required for latent variables) while still maintaining reasonably accurate predictions?
        \item How do these algorithms compare 
        on the predictions they make at the individual level?
    \end{itemize}
\end{itemize}
We corroborate our empirical results on additional data sets
in the context of healthcare and loans.
All our code and results are available online\footnote{
github.com/lurosenb/simplifying\_counterfactual\_fairness}.

%We feel that this work is important because...

% In Section \ref{sec:main}, we show
% that all counterfactually fair algorithms must
% possess the simple property of demographic parity
% for observations drawn from a distribution where
% latent variables are independent of
% protected attributes (this is how latent variables are defined).
% Note that the result immediately gives us a way to test whether
% an algorithm is counterfactually fair.
% In Section \ref{sec:heuristics},
% we derive two algorithms that satisfy demographic parity and
% preserve the ordering of predictions within groups.
% In lieu of a causal model, we argue that preserving
% the order of predictions within each group is the
% most fair choice.
% In Section \ref{sec:eval}, we evaluate
% the three ``levels'' of counterfactual fairness proposed
% in \cite{kusner2017counterfactual}.
% In our experiments on the law school example, we find that two of the three 
% algorithms make predictions which systematically predict
% lower scores for minority classes.
% Furthermore, we visualize the rankings 
% (induced by predicted real value outcomes) of individuals within
% each group for every algorithm 
% and observe that the three levels proposed in 
% \cite{kusner2017counterfactual} have highly unstable rankings.
% We corroborate our findings on two another causal models and data
% sets which we introduce.
\vspace{-0.1cm}
\subsection*{Related Work}
%One of the group fairness definitions that the community has coalesced around is demographic parity \cite{hardt2016equality}, which describes a simple notion of nondiscrimination with respect to protected attributes for entire subgroups. A predictor $\hat{Y}$ is allowed to depend on $A$, the protected class, only through the true value of the target variable $Y$. This encourages using features that predict $Y$, but prohibits abusing $A$ as a proxy for $Y$, equalizing the distribution of outcomes over entire subgroups. Formally, we have $\Pr(\hat{Y}=y|A=0,Y=y) = \Pr(\hat{Y}=y|A=1,Y=y)$ for all $y \in Y$. Neither variety of fairness definition, however, models the causal influences of $A$ explicitly, and so \cite{kusner2017counterfactual} proposed counterfactual fairness to fill that void.
%We delay a detailed description of counterfactual fairness to \hyperref[sec:preliminaries]{Preliminaries}.
Since its introduction, counterfactual fairness has been highly influential. Many recent works on equity in statistics and machine learning directly use or modify only slightly Definition~\ref{def:counterfactualfairness} to enable training classifiers and other decision making algorithms  \cite{Zhang2018FairnessID,Chiappa2019PathSpecificCF,Wu2019CounterfactualFU, coston2020counterfactual,Black2020FlipTestFT,mhasawade2021causal,Chikahara2021LearningIF,vonKgelgen2022OnTF}. \cite{Zhang2018FairnessID} create a procedure for identifying discrimination and applying causal explanations, and then use counterfactuals to design repairs while offering evaluations of this system on a host of examples. \cite{Wu2019CounterfactualFU} and \cite{Chiappa2019PathSpecificCF} expand theoretical groundings for counterfactual fairness, with the former improving on the latent attribute identifiability by offering bounds and techniques. Prior work also expands counterfactual fairness to deal with 
between-group rankings \cite{yang2020causal}. Furthermore, task specific variations and implementations of Definition~\ref{def:counterfactualfairness} also exist for the medical domain \cite{Pfohl2019CounterfactualRF}, variational autoencoders in computer vision \cite{Kim2021CounterfactualFW}, generative-adversarial networks \cite{Xu2019AchievingCF}, and even natural language processing \cite{sari2021counterfactually}.

There are several existing concerns about
counterfactual fairness. Recent work by \cite{Nilforoshan2022CausalCO} present a result that unites many causal notions of fairness, and further cautions that a gap exists between the effects of these popular approaches on fairness and their consequences, highlighting both practical and mathematical limitations.
With particular relevance to our work, they show that when
a predictor satisfying demographic parity \textit{and} a causal
model are specified, the predictor does not necessarily satisfy
counterfactual fairness (see Theorem 2).
In contrast, our work considers the setting where we are only given
a predictor satisfying demographic parity (not a causal model) and
can build a causal model of our choice.
In \cite{kasirzadeh2021use}, the authors conclude that counterfactual fairness may require what they deem to be an ``incoherent theory'' of social categories, i.e., some social categories may not admit counterfactual manipulation.
In \cite{kilbertus2020sensitivity}, they examine the sensitivity of counterfactual fairness to ``unmeasured confounding.'' This is when a true causal effect between variables can be at least partially described by a non-zero correlation between two of the $\epsilon$-``error'' variables calculated during one possible counterfactual fairness process, which can affect the independence guarantee from $A$. We position our work as an important addition to this existing set of papers expressing concerns with and limitations of counterfactual fairness.

Better understanding the theoretical and practical consistency of algorithmic fairness has real social impact. As fairness is operationalized through open source libraries like Fairlearn \cite{bird2020fairlearn} and AIF360 \cite{bellamy2019ai}, work that furthers our understanding of fair model guarantees becomes increasingly important.

\section{Demographic Parity Is Counterfactual Fairness \label{sec:dempariscf}}

Our results start with a simple observation
about the definition of latent variables.
In the original counterfactual fairness paper,
the authors define latent variables $U$ so that 
``$U$ is a set of latent background variables,
which are factors not caused by any variable in
the set $V[=A \cup X]$ of observable variables''
where $A$ is the set of protected attributes
and $X$ is the set of remaining variables
\cite{kusner2017counterfactual}.
In other words, there are no incoming edges to $U$
in the causal model.
This means that instances of latent variables
are generated from a fixed set of parameters
(that do not depend on protected attributes).
In addition, while not explicitly stated in the definition, all causal models we
are aware of also do not have any incoming edges to $A$.
This means that instances of protected attributes
are generated from a fixed set of parameters
(that do not depend on latent variables).
Together, these observations imply the following:

\begin{criteria}\label{crit:independence}
Latent variables are independent
of protected attributes.
\end{criteria}

This criteria is key to understanding practical applications of
counterfactual fairness.
Counterfactual fairness attempts
to explain behavior from protected attributes
and latent variables.
The protected attributes hold information on
features \textit{outside} of individuals' control
which would very often be unfair to base decisions on
while the latent variables contain information on
features \textit{within} an individual's control which
are fair to base decisions on.
From this perspective, it is clear that latent
variables must be independent of protected
attributes. %\footnote{We note that \citet{kusner2017counterfactual} define $U$ so that it is not caused by any other variable including $A$. We interpret ``caused'' as meaning a probabilistic change to $U$ does not change $A$. In other words, $\Pr(A) = \Pr(A |U)$. However, there may be other interpretations of ``caused'' (i.e. meaning an intervention to $U$ does not change $A$  is not caused bydo not explicitly state in Definition~\ref{def:counterfactualfairness} that $U \indep A$. However, they only consider models and algorithms where $U \indep A$, and so we feel that their work strongly implies this criteria in implementing counterfactual fairness.}
Otherwise, this would imply that the inherent `worthiness'
of latent variables would problematically depend
on protected attributes like race, gender, and age.

With Criteria \ref{crit:independence} in hand,
we show that any predictor which satisfies demographic
parity also satisfies counterfactual fairness
after a trivial modification.

Note that a predictor which satisfies
counterfactual fairness must have an internal method
of estimating latent variables and a causal model.
So to say that a predictor which satisfies demographic
parity also satisfies counterfactual fairness,
we must introduce such a method and causal model.
We do so in the proof of the first theorem in a trivial way.

\begin{theorem}\label{thm:dpimplescf}
Any predictor $\hat{Y}': X \times A \rightarrow Y$
that satisfies demographic parity can
be modified into a method for estimating
latent variables and a predictor 
$\hat{Y}: U \times A \rightarrow Y$
that is counterfactually fair.
\end{theorem}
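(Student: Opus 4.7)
The plan is to exhibit an explicit trivial construction. Given any predictor $\hat{Y}'$ satisfying demographic parity, we are free to choose both the causal model and the latent variable estimation procedure, and this freedom lets us design everything around $\hat{Y}'$ itself. The guiding intuition is that a predictor that simply never reads the protected attribute during the prediction step is automatically counterfactually fair, so we should package $\hat{Y}'$ entirely inside the latent variable.

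Concretely, I would introduce a single latent variable $U := \hat{Y}'(X,A)$. By demographic parity, the conditional distribution of $\hat{Y}'(X,A)$ given $A=a$ is the same for every $a$, so $U$ is marginally independent of $A$, satisfying Criteria~\ref{crit:independence}. I would then posit a causal graph in which $A$ and $U$ are root nodes drawn from their respective marginals and $X$ is downstream of $(A,U)$, with the conditional law $X \mid A, U$ chosen to be consistent with the observed data (any choice works for an existence argument). Under this model the posterior $U \mid X=x, A=a$ is a point mass at $\hat{Y}'(x,a)$, which is our trivial latent variable estimator. Finally, I would set the predictor $\hat{Y}: U \times A \rightarrow Y$ to be $\hat{Y}(u,a) := u$, i.e., the rule that ignores $A$ and outputs the latent variable.

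Verification of counterfactual fairness is then immediate: since $\hat{Y}(u,a)$ has no functional dependence on $a$, we have $\hat{Y}_{A\leftarrow a}(u) = \hat{Y}_{A\leftarrow a'}(u) = u$ for every $a, a'$, so both sides of Definition~\ref{def:counterfactualfairness} reduce to the same point mass at $\hat{Y}'(x,a)$. As a bonus, the constructed predictor actually reproduces $\hat{Y}'$ pointwise, since for every observation $(x,a)$ the estimator returns $u=\hat{Y}'(x,a)$ and the predictor returns $u$ unchanged.

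The only obstacle I anticipate is conceptual rather than technical: a skeptical reader may object that our causal model is artificial. I would address this by emphasizing that Definition~\ref{def:counterfactualfairness} admits any causal model whose latent variables have no incoming edges from observables, and that the independence $U \indep A$ demanded by Criteria~\ref{crit:independence} is genuinely delivered here by demographic parity. This is precisely the sense in which the modification is ``trivial'', and it is the formal backbone of the paper's claim that counterfactual fairness adds no content beyond demographic parity once one is allowed to build the causal machinery around the predictor.
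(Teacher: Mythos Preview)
Your proposal is correct and is essentially the paper's own proof: set the latent variable estimate to $u=\hat{Y}'(x,a)$, take the predictor to be the identity $\hat{Y}_{A\gets a}(u)=u$, invoke demographic parity to get $U\indep A$ (Criteria~\ref{crit:independence}), and conclude counterfactual fairness from the fact that $\hat{Y}$ ignores $a$. You add a bit more scaffolding than the paper does (an explicit causal graph with $A,U$ as roots and $X$ downstream, and the remark that the posterior is a point mass), but the construction and verification are identical in spirit and content.
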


\begin{proof}
    Our estimate of the latent variables $u$, given
    protected attributes $a$ 
    and remaining attributes $x$,
    is $u = \hat{Y}'(x, a)$.
    Then the predictor $\hat{Y}: U \times A \rightarrow Y$
    is given by the identity function
    $\hat{Y}_{A \gets a}(u) = u$.
    Because $\hat{Y}'$ satisfies demographic parity,
    we know
    $\Pr(u |a) = \Pr(u | a')$ for $a, a' \in A$.
    That is, latent variables are independent
    of protected attributes which meets Criteria \ref{crit:independence}.
    Further, $\hat{Y}$ satisfies counterfactual
    fairness because $\hat{Y}_{A \gets a}(u) 
    = u = \hat{Y}_{A \gets a'}(u)$.
\end{proof}

The next theorem proves the converse
of Theorem \ref{thm:dpimplescf}; namely,
any counterfactually fair predictor also satisfies
demographic parity.

\begin{theorem}\label{thm:main}
    Consider a method for estimating latent
    variables and a counterfactually fair
    predictor $\hat{Y}:U \times X \rightarrow Y$.
    Then the resulting predictor 
    $\hat{Y'}: X \times A \rightarrow Y$
    satisfies demographic parity.
\end{theorem}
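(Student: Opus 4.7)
The plan is to chain together the two ingredients that the paper has already highlighted: Definition \ref{def:counterfactualfairness}, which says the intervention value of $A$ fed into $\hat{Y}$ is irrelevant once we have conditioned on $(X,A)$, and Criteria \ref{crit:independence}, which says the marginal distribution of $U$ does not depend on $A$. Together these should collapse $\Pr(\hat{Y}'(X,a) = y \mid A = a)$ to a quantity that references neither $a$ nor $x$, which is exactly demographic parity.

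First I would unpack the combined predictor. Writing $\hat{Y}'(x,a) = \hat{Y}_{A \gets a}(u)$ where $u$ is drawn from the posterior $P(U \mid X = x, A = a)$ returned by the latent-variable estimation method, the law of total probability gives
\begin{align*}
\Pr(\hat{Y}'(X,a) = y \mid A = a) = \int \Pr(\hat{Y}_{A \gets a}(u) = y \mid X = x, A = a)\, P(x \mid A = a)\, dx.
\end{align*}
Second, I would invoke Definition \ref{def:counterfactualfairness} to swap the intervention $A \gets a$ for a fixed reference $A \gets a^\star$ inside the integrand, since counterfactual fairness asserts the two conditional probabilities are equal for every $(x,a)$. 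After this swap the integrand no longer depends on $a$ through the intervention. Third, I would collapse the integral using the joint-to-marginal identity $\int P(u \mid x, a)\, P(x \mid a)\, dx = P(u \mid a)$, and then apply Criteria \ref{crit:independence} to replace $P(u \mid a)$ with the prior $P(u)$. What remains is $\Pr(\hat{Y}_{A \gets a^\star}(U) = y)$ with $U \sim P(U)$, a quantity entirely free of $a$. Running the identical derivation with $a'$ in place of $a$ yields the same value, which is exactly Definition \ref{def:demographicparity}.

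The main obstacle I anticipate is being meticulous about the two distinct roles that $A$ plays and not conflating them. Counterfactual fairness only neutralizes the intervention role (the subscript $A \gets a$ in $\hat{Y}_{A \gets a}(u)$), not the conditioning role (the event $A = a$ that governs the posterior over $U$). One might be tempted to argue that $U \indep A$ lets us replace the posterior by the prior in a single stroke, but marginal independence does not imply conditional independence given $X$, so the argument must marginalize $X$ out first and only then appeal to Criteria \ref{crit:independence}. Sequencing those two reductions in the correct order is essentially the whole content of the proof; once the swap-then-marginalize pattern is fixed, no further technical work is required.
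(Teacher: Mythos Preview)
Your proposal is correct and mirrors the paper's own proof: both marginalize the counterfactual-fairness equality over $X$ via $\Pr(x\mid a)$, apply the product rule to obtain a conditional on $A$ alone, and then invoke Criteria~\ref{crit:independence} to drop the remaining dependence on $a$. Your use of a fixed reference intervention $a^\star$ and your explicit remark about sequencing the marginalization before the independence step are cosmetic refinements, not a different argument.
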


\begin{proof}
    Since the predictor is counterfactually fair,
    we know 
    \begin{align}\label{eq:cf}
       \Pr(\hat{Y}_{A \gets a}(u)=y|x, a)
        =\Pr(\hat{Y}_{A \gets a'}(u)=y|x, a) 
    \end{align}    
    for all $y \in Y$ where the realization
    of latent variables
    $u \in U$ is estimated from observations
    $a \in A$ and $x \in X$.
    Taking a weighted sum (if $X$ contains
    continuous random variables, an analogous statement
    holds via integration) of the right side of
    Equation (\ref{eq:cf}) yields
    \begin{align*}
        &\sum_{x \in X} \Pr(x | a)
        \Pr(\hat{Y}_{A \gets a}(u)=y|x, a) \\
        &= \sum_{x \in X} \Pr
        (x, \hat{Y}_{A \gets a}(u)=y|a) \\
        &= \Pr(\hat{Y}_{A \gets a}(u)=y|a)
        = \Pr(\hat{Y}_{A \gets a}(u)=y).
    \end{align*}
    The first equality holds because
    $\Pr(x|a) \Pr(y|x,a) = \Pr(x,y|a)$
    for random events $x,y,a$.
    The last equality holds by Criteria 
    \ref{crit:independence} and since the
    (possible) randomness of $\hat{Y}_{A \gets a}(u)$
    only comes from $u$ and the assignment $A \gets a$.
    The final equality gets us close to demographic parity
    but the predictions could still depend on the
    assignment.
    We repeat the same steps for the
    left side of Equation (\ref{eq:cf})
    and find that
    \begin{align*}
        \Pr(\hat{Y}_{A \gets a}(u) = y)
        = \Pr(\hat{Y}_{A \gets a'}(u) = y).
    \end{align*}
    This tells us that the distribution of
    predictions made by a counterfactually fair
    algorithm are independent of protected
    attributes. That is, demographic parity holds.
\end{proof}

Note that the contrapositive of Theorem \ref{thm:main}
gives us a  simple way to test whether 
a predictor is counterfactually fair.
\vspace{-0.4cm}
\section{Algorithms for Counterfactual Fairness}\label{sec:heuristics}

In this section, we empirically analyze six algorithms in
the context of
counterfactual fairness on the running law school
example.
We compare them in terms of
how well they achieve demographic parity (Definition
\ref{def:demographicparity}),
counterfactual fairness (Definition
\ref{def:counterfactualfairness}), and
independence between protected attributes
and latent variables (Criteria \ref{crit:independence}).
We also compare the algorithms' test accuracy and the resulting
trade-offs between fairness and performance.
Finally, we investigate how predictions on individuals
differ between the algorithms.
\vspace{-0.1cm}
\subsection{The Algorithms}
The first three algorithms---referred to as
``Levels 1, 2 and 3''---come from the original counterfactual
fairness paper \cite{kusner2017counterfactual}.
The next two algorithms are simple heuristics
for demographic parity while the final algorithm
is a straightforward learner without fairness constraints.

For all the algorithms we analyze, we use a linear
regression model, $\ell_2$-norm loss,
and Adam optimizer for learning.
We describe the algorithms next.

\paragraph*{Level 1} Since evaluating the relationships
between variables in a causal model is computationally
expensive, the first level only uses
the remaining variables that are independent of protected
attributes to learn the outcomes.
In practice, scenarios where remaining variables are not at least
partially conditioned on protected attributes are so rare as to be virtually non-existent.
Instead, we implement Level 1 by using all remaining variables
(without any protected attributes).
This approach has been called 
``fairness through unawareness'' \cite{grgic2016case}
and fails to make fair decisions because protected
attributes are often redundantly encoded
\cite{pedreshi2008discrimination}.
\paragraph*{Level 2} The second level uses the full power of causal models, but suffers from expensive and intensive computations; it requires domain expertise to specify the joint distributions over all variables for the causal model. Remaining variables are distributed according to subsets of latent variables and protected attributes. By using the known protected attributes, the remaining variables, and the causal model, Level 2 estimates likely values of latent variables. Then only the latent variables are used to learn a predictor over the set of possible outcomes.
\paragraph*{Level 3} The third level is a compromise between the simplicity of Level 1 and complexity of Level 2. Level 3 uses the relationships in the causal model to express the remaining variables  of each individual as a deterministic function of related protected attributes and a special explanation term. The deterministic function is learned from protected attributes to explain the remaining variable. Then the difference between the deterministic function and an individual's remaining variable is their explanation term. These explanation terms are used to learn the outcomes.

\begin{lstlisting}[caption={Converting unfair predictions to fair predictions when labels are normally distributed.},label=alg:normal,captionpos=t,float,abovecaptionskip=-\medskipamount, mathescape=true,escapeinside={*}{*}]
Input: $n$ group identities $a_i \in A$ and unfair labels $\bar{y_i}$ for samples $i \in [n]$
Output: predictions $\hat{y}_i$ that satisfy demographic parity
$\mu \leftarrow $ mean($\{ \bar{y}_i: i \in [n] \}$)
$\sigma \leftarrow $ std($\{ \bar{y}_i: i \in [n] \}$)
for $a$ in $A$ do
    $\mu_a \leftarrow $ mean($\{ \bar{y}_i: a_i=a \}$)
    $\sigma_a \leftarrow $ std($\{ \bar{y}_i: a_i=a \}$)
    for $i$ in $\{i: a_i = a\}$ do
        $\hat{y_i} = \mu + \sigma (\bar{y}_i - \mu_a)/\sigma_a$
\end{lstlisting}

\begin{lstlisting}[caption={Converting unfair predictions to fair predictions under any distribution of labels.},label=alg:quantile,captionpos=t,float,abovecaptionskip=-\medskipamount,mathescape=true,escapeinside={*}{*}]
Input: $n$ group identities $a_i \in A$ and unfair labels $\bar{y_i}$ for samples $i \in [n]$
Output: predictions $\hat{y}_i$ that satisfy demographic parity
$CDF \gets $ empirical CDF of $\{y_i: i \in [n]\}$
for $a$ in $A$ do
    $CDF_a \gets $ empirical CDF of $\{y_i: a_i = a \}$
    for $i$ in $\{i: a_i = a\}$
        $\hat{y}_i \gets CDF( CDF_a^{-1}(\bar{y}_i))$
\end{lstlisting}

The next two algorithms are simple heuristics
for demographic parity.
\paragraph*{Listing~\ref{alg:normal}} The first listing assumes the distribution of outcomes in every protected group is normally distributed. Then we can achieve demographic parity by simply calculating the normalized score of each individual within their protected group and converting it to the outcome  distribution of the full population.
\paragraph*{Listing~\ref{alg:quantile}} The second listing drops the assumption on the distribution of each protected group. Instead, Listing 2 estimates the cumulative density function (CDF) of each protected group and converts an individual's relative position within their group to the same position within the full population.

Listing 1 is appropriate when the outcomes are normally distributed
while Listing 2 is appropriate in general but cannot distinguish
noise from signal in distributions.
\paragraph*{Full Linear Model} The final algorithm uses a linear model on all protected attributes and remaining variables to learn outcomes. We expect the full linear model to mimic the unfair behavior of the underlying data but achieve the best test accuracy.

\begin{figure}[ht]
    \centering
    \begin{subfigure}
        \centering
        \includegraphics[scale=.27]{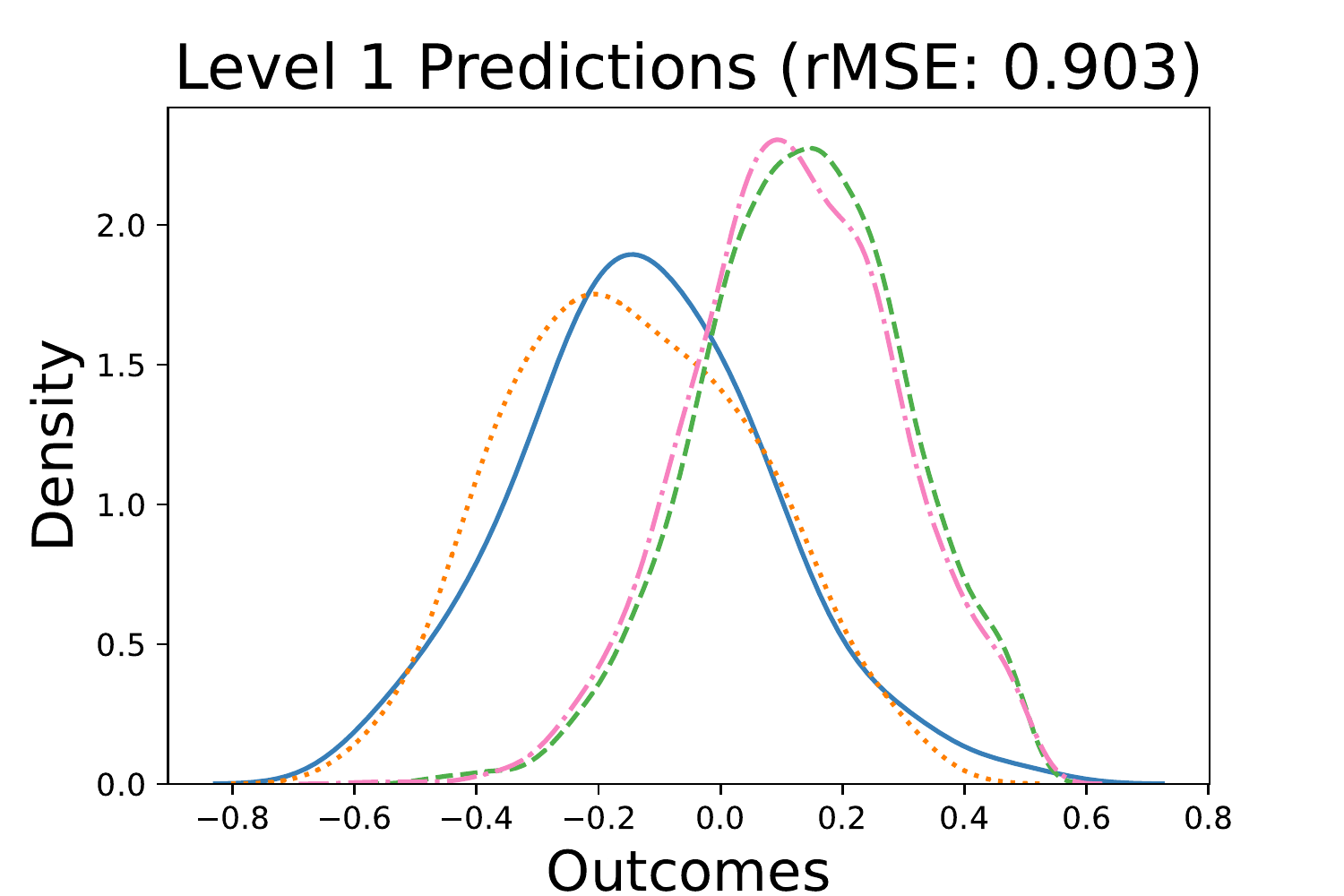}
    \end{subfigure}   
    \begin{subfigure}
        \centering
        \includegraphics[scale=.27]{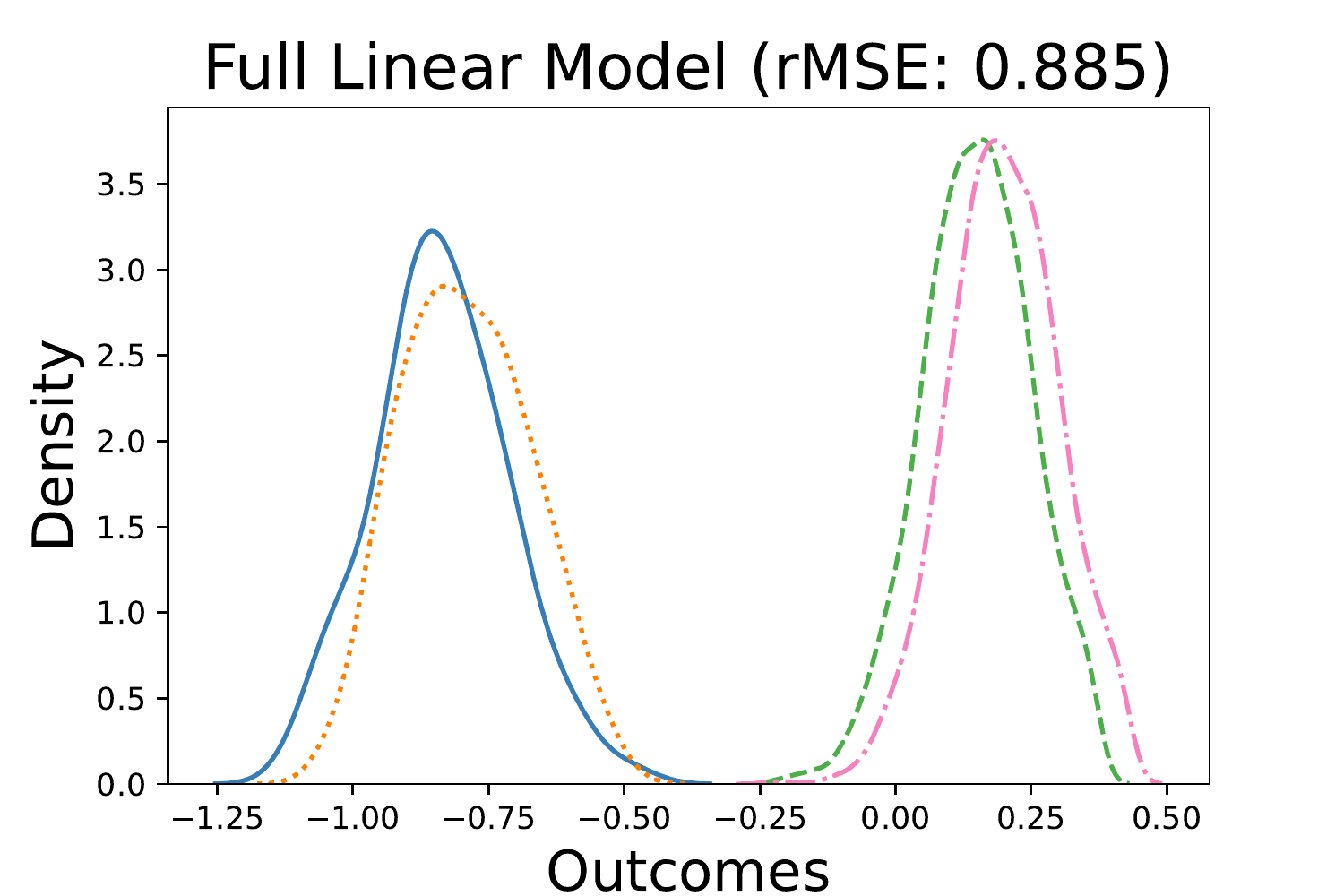}
    \end{subfigure}
     \begin{subfigure}
        \centering
        \includegraphics[scale=.27]{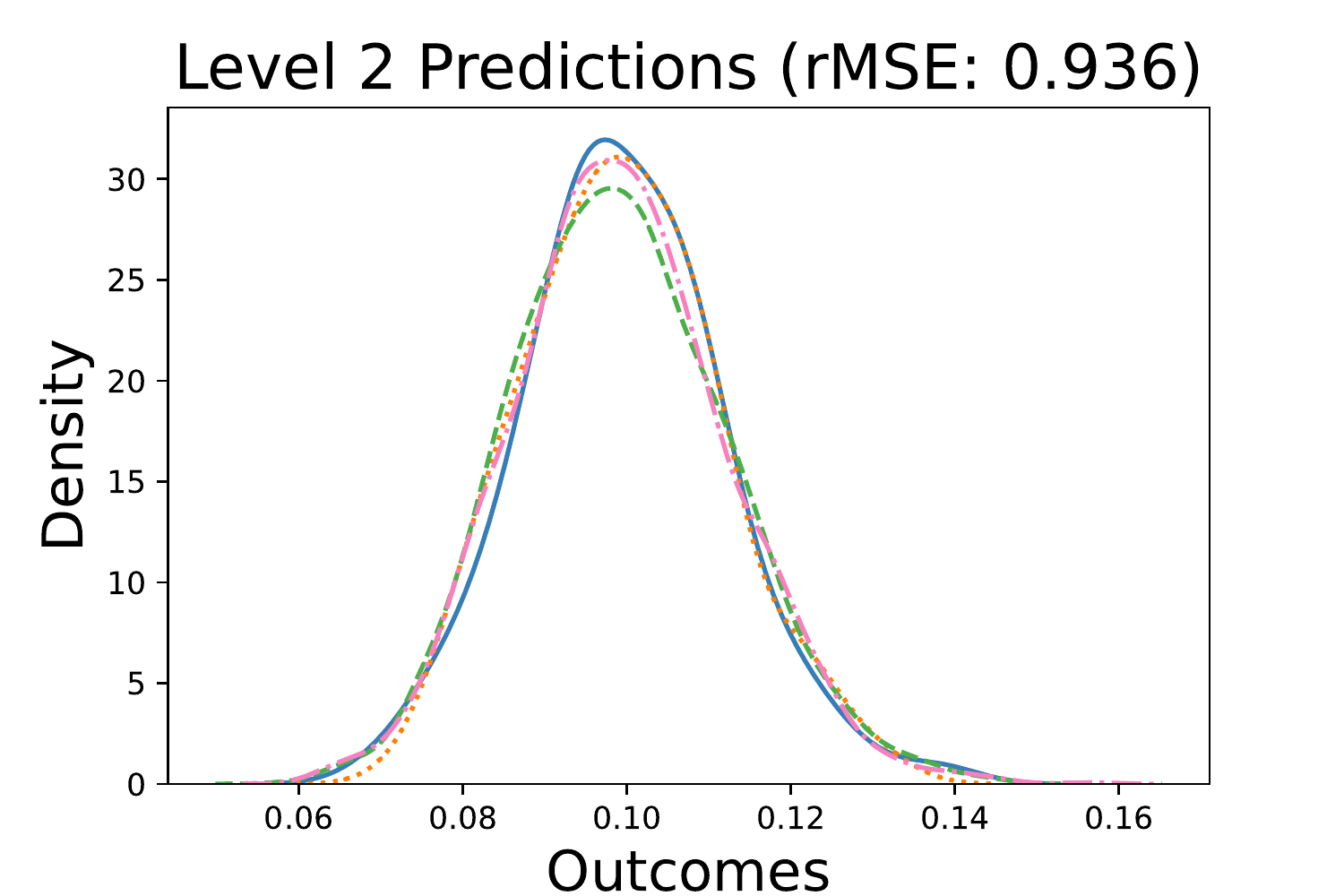}
    \end{subfigure}    
    \begin{subfigure}
        \centering
        \includegraphics[scale=.27]{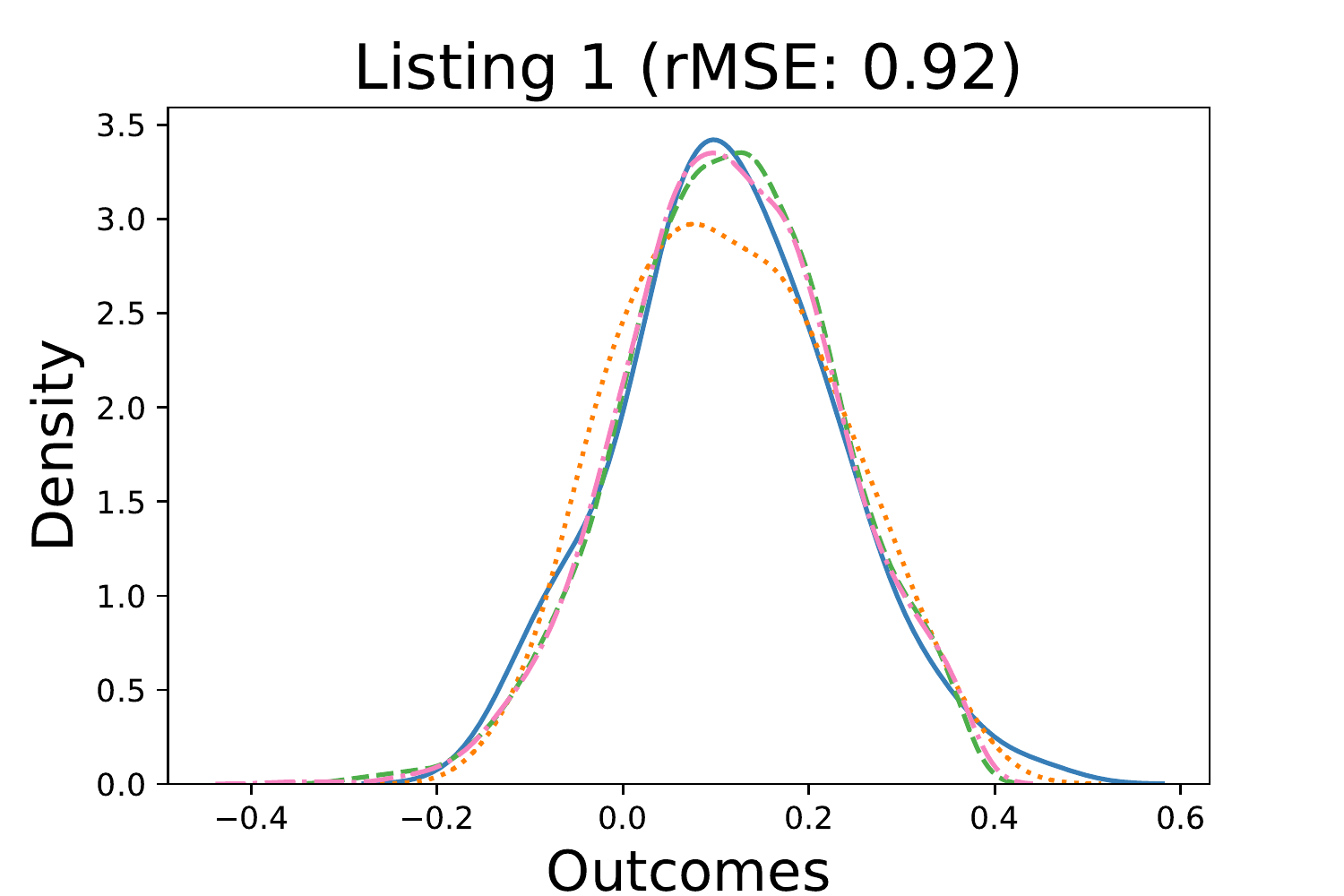}
    \end{subfigure} 
    \begin{subfigure}
        \centering
        \includegraphics[scale=.27]{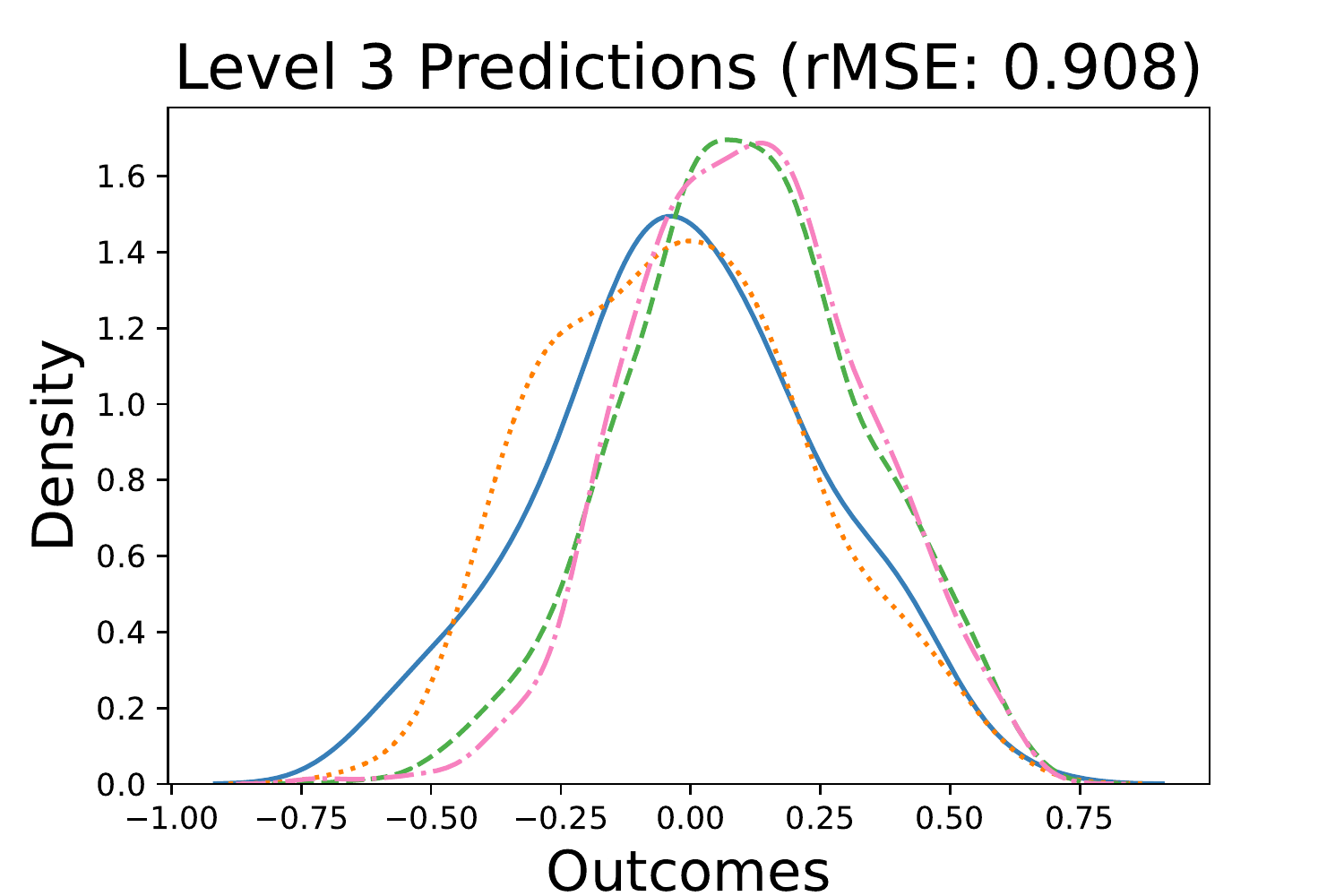}
    \end{subfigure} 
    \begin{subfigure}
        \centering
        \includegraphics[scale=.27]{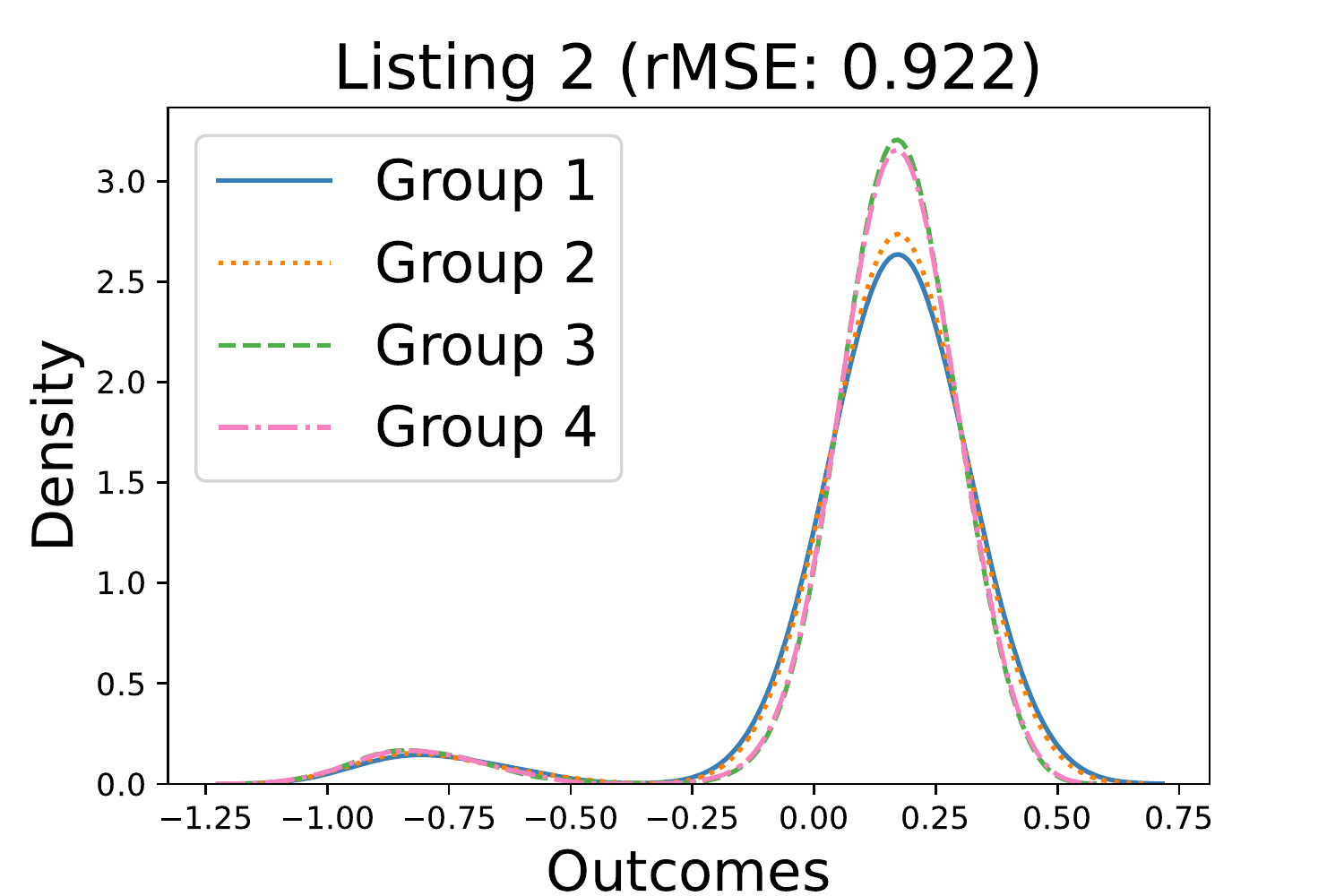}
    \end{subfigure}    
    \caption{Density plots for the distributions over predictions made by the six
    algorithms we consider. The root mean squared error
    (rMSE) is given next to each title and in Table \ref{tab:rmse}. Groups 1, 2, 3, and 4 correspond to protected groups over the Cartesian product of binary race and gender.}
    \label{fig:alg_predictions}
\end{figure}
\vspace{-0.2cm}
\subsection{Measures of Fairness}
We use several empirical tests to measure how well
each of the algorithms achieves different definitions
of fairness.

We first define a relaxation of counterfactual fairness
introduced in \cite{russell2017worlds}.
\begin{definition}[$(\epsilon, \delta)$-Approximate Counterfactual Fairness (ACF)]
    A predictor $\hat{Y}:X \times A \rightarrow Y$ 
    satisfies $(\epsilon, \delta)$-ACF if 
    \begin{align*}
        \Pr(\vert \hat{Y}_{A \gets a}(u) 
        -\hat{Y}_{A \gets a'}(u)  \vert \le \epsilon \vert
        x, a) \ge 1-\delta.
    \end{align*}
\end{definition}
Notice that calculating $(\epsilon, \delta)$-ACF
requires an internal estimate of latent variables.
The only algorithms that have a nontrivial estimate of latent
variables are Level 2 and Level 3.
Since both Level 2 and Level 3 use \textit{only} the
latent variables in learning the outcomes,
changing the protected attribute has no effect and
both algorithms are $(0,0)$-ACF.

For testing demographic parity and whether 
latent variables are independent
of protected attributes, we use the 
Kruskal-Wallis H test \cite{kruskal1952use}.
\begin{definition}[Kruskal-Wallis H Statistic]
    The H statistic is given by
    \begin{align*}
        (N-1)
        \frac{\sum_{i=1}^g n_i 
        (\bar{r}_i - \bar{r})^2}
        {\sum_{i=1}^g \sum_{j=1}^{n_i}
        (r_{ij}-\bar{r})^2}
    \end{align*}
    where $N$ is the total number of observations,
    $g$ is the number of groups,
    $n_i$ is the number of observations in group $i$,
    $r_{ij}$ is the rank (among all observations)
    of observation $j$ from group $i$,
    $\bar{r}_i$ is the average rank of observations
    in group $i$,
    and $\bar{r}$ is the average rank of
    all observations.
\end{definition}

Using the Kruskal-Wallis H statistic,
we can calculate the probability that every
group comes from a distribution with the same
median.
Since the latent variables are independent 
of the protected
attributes if and only if the 
distributions of the protected
groups are the same, a small probability
from the Kruskal-Wallis H test implies that the
latent variables and protected attributes
are not independent.
We also use the test to 
determine whether an algorithm satisfies
demographic parity.
Note it is possible that all groups
come from distributions with the same median
but that the distributions are in fact different
so the test has one-sided error.

We present our results of the Kruskal-Wallis
H tests in Table \ref{tab:fair}.
The first three data rows tell us the probability that
Criteria \ref{crit:independence} holds.
It is very likely that the latent variables in
each protected group come from distributions
with the same median in Level 2, suggesting that
Level 2 satisfies Criteria \ref{crit:independence}
in addition to counterfactual fairness.
In contrast, it is unlikely (probability $< .05$) that the 
two latent variables 
in Level 3 are independent of protected attributes,
suggesting that Level 3 does 
not satisfy Criteria~\ref{crit:independence}.
Note that Level 3 is an example of an algorithm
that ostensibly satisfies counterfactual 
fairness but
the latent variables do not meet the necessary
condition of Criteria \ref{crit:independence}.
We corroborate this finding in Figure 
\ref{fig:alg_predictions}:
Level 1 and Level 3
clearly do not satisfy demographic parity
because the latent variables are not 
independent of protected attributes.

The last six rows in Table \ref{tab:fair} tell us whether
each algorithm satisfies demographic parity.
As expected, Level 1 and the Full Linear Model are
\textit{remarkably} unlikely to satisfy demographic parity.
In line with the results for latent variables,
we see that Level 3 also is unlikely to satisfy demographic parity.
Finally, Level 2 likely satisfies demographic 
parity.
This is to be expected since the assumptions
of Theorem \ref{thm:main}---namely, 
counterfactual fairness and 
Criteria \ref{crit:independence}---hold.
Of course, Listings 1 and 2 are very likely to
satisfy demographic parity by design.

\begin{table}[h!]
\centering
\small 
\begin{tabular}{lll}
\toprule
Variable & H Statistic & $p$-value \\
\midrule
Level 2 Latent Variable & $.347$ & $.951$ \\
Level 3 Latent UGPA & $10.8$ & $.0126$ \\
Level 3 Latent LSAT & $84.6$ & $3.21\times 10 ^{-18}$ \\
\midrule
Level 1 Predictions & $221$ & $9.81 \times 10^{-48}$ \\
Level 2 Predictions & $.346$ & $.951$ \\
Level 3 Predictions & $69.3$ & $5.97 \times 10^{-15}$ \\
Listing 1 Predictions & $.165$ & $.983$ \\
Listing 2 Predictions & $.035$ & $.998$ \\
Full Predictions & $829$ & $2.53 \times 10^{-179}$ \\
\bottomrule
\end{tabular}
\caption{Kruskal-Wallis H test results for different variables.
The $p$-value indicates the likelihood of observing the H
statistic if the variable was distributed the same in every
group.
Every number is reported to three significant figures.}
\label{tab:fair}
\end{table}

Table \ref{tab:rmse} gives the root mean squared error (rMSE)
of every algorithm we consider. As expected, the Full Linear Model
without fairness constraints has the lowest error.
Listing 1 and Listing 2 also give relatively low error.
In contrast, Level 2 has the highest error.
So far, Level 2 simulatenously satisfies
Definition \ref{def:demographicparity},
Definition \ref{def:counterfactualfairness}, and Criteria
\ref{crit:independence}
but we see this is at the expense of accuracy.

\begin{table}[h!]
\centering
\small 
\begin{tabular}{rrrrrr}
\toprule
Lvl 1 & Lvl 2 & Lvl 3 &  Lst 1 &  Lst 2 & Full \\
\midrule
.933 & .936 & .908 & .919 & .921 & .881 \\
\bottomrule
\end{tabular}
\caption{The root mean squared error (rMSE) for each algorithm predicted
to three significant figures.}
\label{tab:rmse}
\end{table}
\vspace{-0.3cm}
\subsection{Individual Predictions}
We have seen how the algorithms we consider perform
differently in terms of three fairness measures.
But how do the algorithms differ on the actual outcomes
for individuals?

Figure \ref{fig:changingrank} visualizes the change in relative ordering
of predicted outcomes for 40 randomly chosen individuals in Group 1
(Black men) in the law school example.
Notably, the relative ordering is constant between Listing 1, Listing 2,
and the Full Linear Model, as expected.
However, the relative ordering is \textit{highly} unstable for Level 1,
Level 2, and Level 3.
This suggests that while Level 2 and Level 3 
achieve counterfactual fairness,
they do so in dramatically different ways.
One explanation is that Level 2 satisfies Criteria \ref{crit:independence}
while Level 3 does not.

\begin{figure}[ht]
    \centering
    \includegraphics[width=8.5cm]{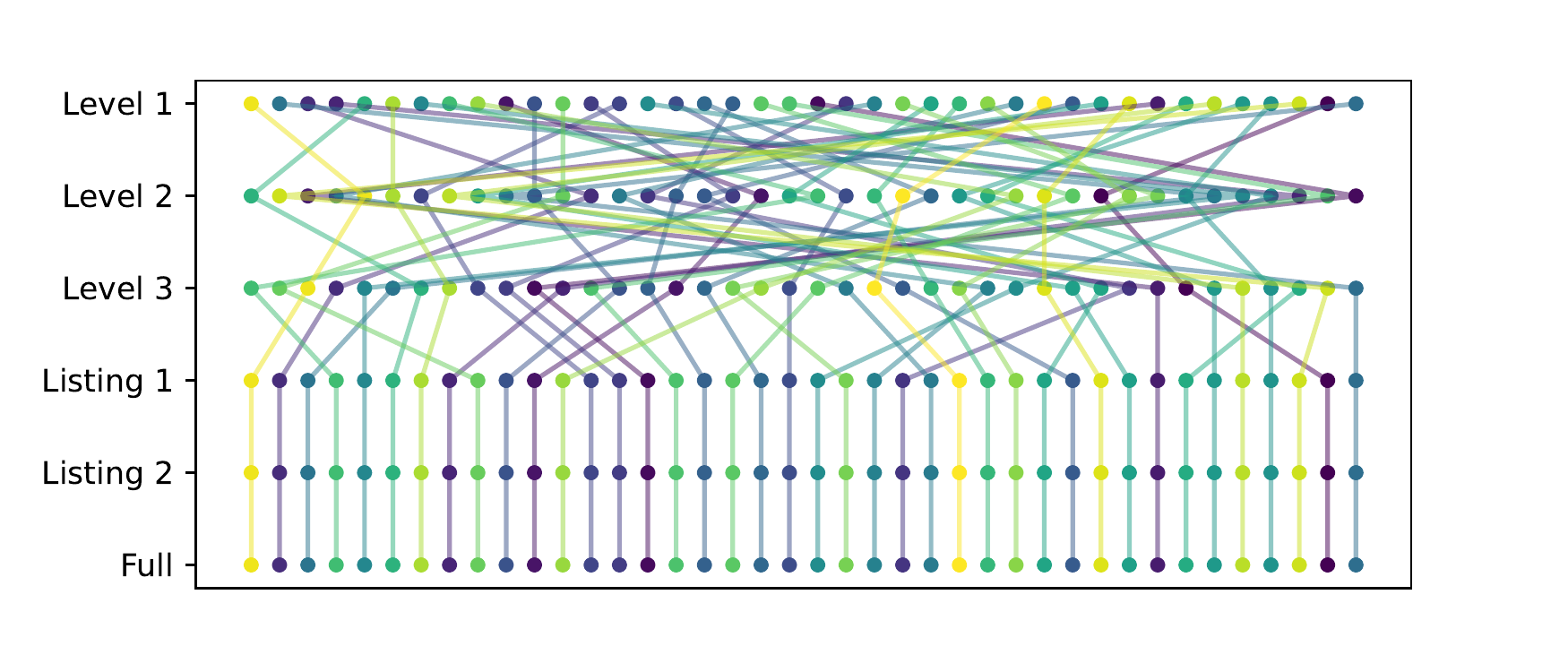}
    \caption{We randomly sample 40 individuals from Group 1
    (Black men) in the law school example and visualize how the relative 
    order of their predicted outcomes changes based on 
    the algorithm making predictions.}
    \label{fig:changingrank}
    \vspace{-0.2cm}
\end{figure}

We find it surprising that Level 2 does not agree with
Listing 1 and Listing 2 on the relative order of outcomes.
All three algorithms satisfy demographic parity, Criteria~\ref{crit:independence}, and, by Theorem \ref{thm:dpimplescf},
counterfactual fairness (when Listing 1 and Listing 2 are 
trivially modified).
How could it be that two algorithms are both fair under
the same definitions while making radically different
predictions on individuals?

This question suggests the following natural definition.

\begin{definition}[Preserving Group Ordering]\label{def:preserve}
    Given unfair labels, we say an algorithm preserves group ordering if the relative ordering  \textit{within} each protected group under the unfair labels is the the same as the relative ordering of individuals  induced by the algorithm's predictions.
\end{definition}

Put differently, if individuals are in the same protected class, then their relative ordering from a (possibly) unfair predictor should induce the same relative ordering as the fair predictor. That is, if an individual performs better under the same unfair conditions as another individual, they should perform better in a hypothetical world where those unfair conditions were removed.

We believe that preserved within-group orderings are preferable to the highly unstable orderings in the levels of  \cite{kusner2017counterfactual}. One reason to prefer preserved orderings is that even soft guarantees for relative orderings  of within group individuals leads to more transparent, trustworthy and ultimately \textit{useful} fair algorithms  \cite{abdollahi2018transparency, bhatt2020explainable}.

Observe that Listing 1 and Listing 2 satisfy 
Definition \ref{def:preserve} by design.
This contrasts with Levels 1, 2, and 3 
which radically change the relative
ordering of individuals within a protected group.
In Proposition \ref{prop:counterexample}, 
we show that there is
a causal model where counterfactual
fairness is mutually exclusive with Definition
\ref{def:preserve}.
This suggests that for some causal models, achieving counterfactual
fairness and preserving relative group ordering are incompatible.

Note that Proposition \ref{prop:counterexample},
combined with the fact that Listing 1 and 2
satisfy demographic parity and preserve group order,
does not refute Theorem \ref{thm:dpimplescf}
since the method of estimating latent variables in the proof
of the theorem uses a trivial causal model.

\begin{proposition}\label{prop:counterexample}
    There is a causal model and predictor
    where a counterfactually
    fair algorithm does not maintain the relative
    ordering of predictions 
    \textnormal{within} each group.\footnote{An earlier version of this work gave an incorrect proof
    of the proposition.}
\end{proposition}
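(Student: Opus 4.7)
The plan is to exhibit an explicit causal model together with a counterfactually fair predictor and a choice of unfair labels, so that within a single protected group the ordering induced by the predictor disagrees with the ordering of the unfair labels. The mechanism will be to introduce two remaining variables $X_1$ and $X_2$ both carrying information about the latent $U$: the posterior estimate of $U$ must combine both signals, while the unfair label is a function of only one of them. This asymmetry lets two individuals in the same group have the same $X_1$ (and hence the same unfair label) but very different $X_2$, so that the Bayesian latent estimates---and therefore the counterfactually fair predictions---disagree with the unfair label ordering.

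Concretely, I would take $A \in \{0,1\}$ uniform, $U \sim \mathcal{N}(0,1)$ independent of $A$, and
\[
X_1 = A + U + \epsilon_1, \qquad X_2 = A - U + \epsilon_2,
\]
with independent noises $\epsilon_1, \epsilon_2 \sim \mathcal{N}(0,\sigma^2)$. The unfair label is $\bar{Y} = X_1$, the sort of biased proxy a domain expert might rely on. The predictor is $\hat{Y}(u) = u$ with the latent estimate given by the posterior mean $\mathrm{E}[U \mid X_1, X_2, A]$. Counterfactual fairness (Definition~\ref{def:counterfactualfairness}) is immediate, since the output $\hat{Y}_{A\gets a}(u) = u$ is invariant under the intervention on $A$. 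A routine Gaussian conditioning calculation shows that, within group $A = 0$, the posterior mean is proportional to $X_1 - X_2 = 2U + \epsilon_1 - \epsilon_2$, while the unfair label within that group is $X_1 = U + \epsilon_1$. To finish, I would exhibit two realizations that flip: $(U,\epsilon_1,\epsilon_2) = (1, 0, 0)$ and $(0, 1/2, -10)$ both lie in group $A = 0$, yield $\bar{Y}$ values $1 > 1/2$, yet produce $\hat{Y}$ values proportional to $2 < 21/2$. Each realization has positive density under the Gaussian, so the event occurs with positive probability.

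The main obstacle---and, I suspect, why the earlier attempt referenced in the footnote failed---is bookkeeping around what counts as the ``prediction'' for ordering purposes. The predictor produces a posterior distribution over outcomes, so I would pin the notion down by taking $\hat{Y}(u) = u$ and using the posterior mean to rank individuals; this guarantees a well-defined ordering while keeping counterfactual fairness transparent. I would close with a brief remark reconciling the result with Theorem~\ref{thm:dpimplescf}: there, the causal model is constructed \emph{from} the demographic-parity predictor, so ordering is preserved by fiat, whereas here the causal model and its multiple remaining variables are fixed in advance, and the counterfactually fair predictor is then forced to invert information in a way that can scramble the unfair-label ordering.
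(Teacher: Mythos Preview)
Your construction is correct and establishes the proposition, but it proceeds quite differently from the paper's own argument. The paper builds a purely discrete, two-person toy model: a binary protected attribute $A$ (height), a binary latent $U$ (charm) with the two individuals' latents perfectly anti-correlated, no remaining variables $X$ at all, and a predictor that \emph{does} use $A$, namely $\hat{Y}_{A\gets\text{short}}(u)=u$ and $\hat{Y}_{A\gets\text{tall}}(u)=\texttt{NOT}(u)$. Counterfactual fairness holds because each person's $u$ is uniform on $\{0,1\}$, so flipping it leaves the prediction distribution unchanged; the ordering of the two individuals nonetheless reverses between the short and tall interventions. There are no explicit unfair labels in the paper's proof---the violation of group-order preservation is exhibited as instability of the within-group ranking under the counterfactual change of $A$.

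Your route is arguably closer to the spirit of Definition~\ref{def:preserve}: you fix an explicit unfair label $\bar{Y}=X_1$ and show the counterfactually fair posterior-mean ranking disagrees with it inside a single group, which is exactly what the definition asks for. The mechanism---two noisy channels for $U$ so that the Bayes estimate combines information the unfair label ignores---is also more recognizable as the phenomenon driving the empirical instability the paper reports for Level~2. The cost is a short Gaussian-conditioning calculation and a choice (posterior mean) to collapse the posterior to a scalar for ranking. The paper's construction buys extreme minimality (four binary cells, no calculus) and shows the reversal can arise even when the predictor is deterministic in $u$; yours buys a tighter match to Definition~\ref{def:preserve} and a more realistic causal picture with remaining variables. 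Both are valid proofs of the proposition.
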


\begin{table}[h!]
\begin{tabular}{l|llll|llll|}
\cline{2-9}
\textbf{}                   & \multicolumn{4}{c|}{Short}                                                     & \multicolumn{4}{l|}{Tall}                                                      \\ \cline{2-9} 
                            & \multicolumn{2}{l|}{World A}                    & \multicolumn{2}{l|}{World B} & \multicolumn{2}{l|}{World A}                    & \multicolumn{2}{l|}{World B} \\ \cline{2-9} 
                            & \multicolumn{1}{l|}{$u$} & \multicolumn{1}{l|}{$\hat{y}$} & \multicolumn{1}{l|}{$u$}  & $\hat{y}$  & \multicolumn{1}{l|}{$u$} & \multicolumn{1}{l|}{$\hat{y}$} & \multicolumn{1}{l|}{$u$}  & $\hat{y}$  \\ \hline
\multicolumn{1}{|l|}{Teal}  & \multicolumn{1}{l|}{0} & \multicolumn{1}{l|}{0} & \multicolumn{1}{l|}{1}  & 1  & \multicolumn{1}{l|}{0} & \multicolumn{1}{l|}{1} & \multicolumn{1}{l|}{1}  & 0  \\ \hline
\multicolumn{1}{|l|}{Lucas} & \multicolumn{1}{l|}{1} & \multicolumn{1}{l|}{1} & \multicolumn{1}{l|}{0}  & 0  & \multicolumn{1}{l|}{1} & \multicolumn{1}{l|}{0} & \multicolumn{1}{l|}{0}  & 1  \\ \hline
\end{tabular}
\caption{Latent variable charm $U$ and predicted outcome $\hat{Y}$ for each person, world, and protected group in the example for Proposition
\ref{prop:counterexample}.}
\label{tab:example}
\end{table}

\begin{proof}
Consider an example with two people, 
say Teal and Lucas.
The protected attribute $A$ is binary height,
the latent variable $U$ is binary charm, 
and the true outcome $Y$ is binary success.
The latent variable for Teal is 
$u_\textrm{Teal} \sim \textrm{Binomial}(1/2)$
while the latent variable for Lucas is
$u_\textrm{Lucas} = 1 - u_\textrm{Teal}$.
In other words,
the latent variable for each person is probabilistic 
but correlated: with probability 1/2, we live in
World A and otherwise, we live in World B.
In World A, Teal is not charming and Lucas is 
while, in World B, Teal is charming and Lucas is not.
Now the predictor function satisfies
$\hat{Y}_{A \leftarrow \textrm{short}}(u) = u$,
$\hat{Y}_{A \leftarrow \textrm{tall}}(u) 
= \texttt{NOT}(u)$
for latent variable charm $u \in \{0,  1\}$.
The output of the predictor in each world is given
in Table \ref{tab:example}.

Notice that the relative order of the outcome
produced by the predictor and causal model changes
with the counterfactual intervention.
In World A, Teal is behind Lucas as a short
person and ahead as a tall person.
In World B, Teal is ahead of Lucas as a short
person and behind as a tall person.
We have constructed a causal model and predictor
but it remains to show that they satisfy
counterfactual fairness and that the latent
variable is independent of protected attributes.

First, we show that the causal model and predictor
are counterfactually fair.
Recall that World A and World B are equally likely.
As a result,
by inspecting Table \ref{tab:example} and the
construction,
we conclude that both Teal and Lucas are
as likely to be successful
as a short person and as a tall person.
Therefore the construction satisfies 
counterfactual fairness.

Second, we show that the latent variable is independent
of the protected attribute.
That is, we want to show $\Pr(u,a) = \Pr(u) \Pr(a)$
for $u, a \in \{0, 1\}$ in our example.
Again, by inspecting Table \ref{tab:example} and the
construction, we see that this holds.
For example,
the probability a person is charming and tall is 2/8
while the probability a person is charming is 4/8 and
the probability a person is tall is 4/8.
The same holds for the other three possibilities.
\end{proof}

\paragraph*{Additional Experiments}
So far we have reported on the extensive 
empirical experiments we ran on the law school data set.
We corroborate our findings by running the same set of experiments on two more examples.
The first example is data from the Home
Credit Default Risk data set\footnote{\href{https://www.kaggle.com/c/home-credit-default-risk}{https://www.kaggle.com/c/home-credit-default-risk}}.
We simulated a ``domain expert'' process of causal analysis and constructed a credit risk
causal model by hand (using observations from the data).
The second example is synthetic data
generated \textit{directly} from a causal model we built, 
modeling a hypothetical healthcare context
loosely motivated by \cite{obermeyer2019dissecting}, and represents the ``ideal'' circumstance where the causal model posited over the data is certainly correct.
We show the causal models in Figure \ref{fig:othercausalmodels}.
The remaining figures and results 
are included for reproducibility in the the same 
\href{https://github.com/lurosenb/simplifying_counterfactual_fairness}
{\textbf{repository}}
as the rest of our results.

\begin{figure}[ht!]
    \centering
    \includegraphics[scale=.2]{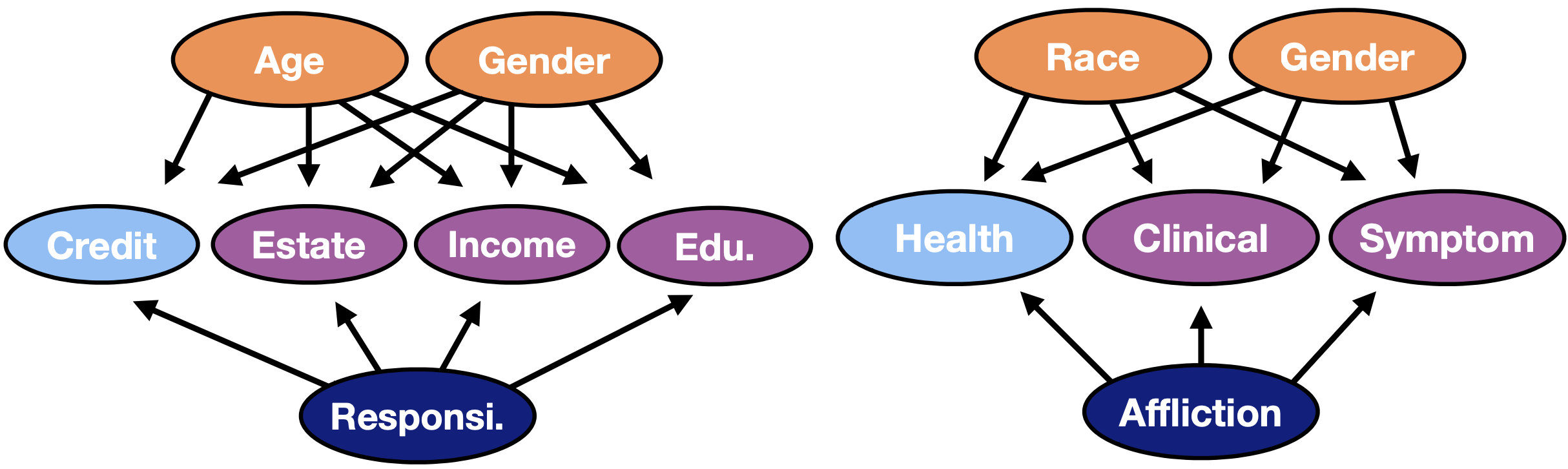}
    \caption{The causal diagrams for the default risk example (on the left) and healthcare example (on the right).}
    \label{fig:othercausalmodels}
\end{figure}

%\vspace{-0.3cm}
\section{Conclusion}\label{sec:discuss}
Counterfactual fairness has been
celebrated as novel and promising while
demographic parity has been treated as simple and flawed.
Our work shows how the two definitions of fairness are
basically equivalent when considered across an entire population.
We think it is a promising avenue for future work
to identify similarities between other definitions of fairness.
%We leave open the question
%\begin{align*}
%    &\textrm{Do algorithms that satisfy demographic parity also}
%    \\
%    &\textrm{satisfy counterfactual fairness
%    for \textit{any} causal model?}
%\end{align*}
%In addition,
%our work provides a deep analysis
%of three existing algorithms
%to achieve counterfactual fairness in comparison to
%three benchmark algorithms.
%We find that the existing algorithms generally do
%not satisfy the requirements of counterfactual fairness and,
%if they do, sacrifice accuracy and computational efficiency.
%In contrast, two simple benchmark algorithms outperform
%the existing algorithms 
%in fairness, accuracy, and efficiency.
We also introduce a method for comparing algorithms
at the level of individual predictions.
Our findings suggest a natural definition
of preserving group orderings which can be mutually
exclusive 
with counterfactual fairness.
%We believe that it is preferable to 
%have a transparent and consistent
%policy when making decisions about individuals. 
We leave open the problem of designing
novel algorithms that are competitive with simple
benchmarks in terms
of fairness, performance, and preserving group orderings.

\section*{Acknowledgements}
We thank Apoorv Vikram Singh
for his insightful
comments on an early draft of this work.
\bibliography{references}

\end{document}